\let\setcitestyle\@gobble
\let\setcitestyle\undefined
\let\csname ver@natbib.sty\endcsname\@undefined
\newcommand{\citet}[1]{\textcite{#1}}
\newcommand{\citep}[1]{\parencite{#1}}
\icmltitlerunning{Neural Variational Gradient Descent}
\DeclareMathOperator*{\argmax}{arg max}
\DeclareMathOperator*{\argmin}{arg min}
\DeclareMathOperator{\dif}{d \!}
\DeclareMathOperator{\dv}{div}
\DeclareMathOperator{\KL}{KL}
\DeclareMathOperator{\SD}{SD}
\DeclareMathOperator{\RSD}{RSD}
\DeclareMathOperator{\med}{med}
\DeclareMathOperator*{\ev}{E}
\newcommand{\e}{\varepsilon}
\newcommand{\isum}{\sum_{i=1}^n}
\newcommand{\R}{\mathbb{R}}
\newcommand{\X}{\mathcal X}
\newcommand\note[1]{\textcolor{red}{#1}}
\newcommand{\hs}[1]{\textcolor{red}{\textbf{hs}: #1}}
\newcommand{\vf}[1]{\textcolor{blue}{\textbf{vf}: #1}}
\newcommand{\la}[1]{\textcolor{cyan}{\textbf{ll}: #1}}
\renewcommand\note[1]{}
\renewcommand{\hs}[1]{}
\renewcommand{\vf}[1]{}
\renewcommand{\la}[1]{}
\theoremstyle{plain}
\newtheorem{theorem}{Theorem}[section]
\newtheorem{proposition}[theorem]{Proposition}
\theoremstyle{remark}
\theoremstyle{definition}
\begin{document}

\twocolumn[
\icmltitle{Neural Variational Gradient Descent}

\icmlsetsymbol{equal}{*}

\begin{icmlauthorlist}
\icmlauthor{Lauro Langosco di Langosco}{eth}
\icmlauthor{Vincent Fortuin}{eth}
\icmlauthor{Heiko Strathmann}{dm}
\end{icmlauthorlist}

\icmlaffiliation{eth}{ETH Zurich, Zurich, Switzerland}
\icmlaffiliation{dm}{Deepmind, London, United Kingdom}

\icmlcorrespondingauthor{Lauro Langosco di Langosco}{langosco.lauro@gmail.com}

\vskip 0.3in
]

\printAffiliationsAndNotice{}

\begin{abstract}

Particle-based approximate Bayesian inference approaches such as Stein Variational Gradient Descent (SVGD) combine the flexibility and convergence guarantees of sampling methods with the computational benefits of variational inference.
In practice, SVGD relies on the choice of an appropriate kernel function, which impacts its ability to model the target distribution---a challenging problem with only heuristic solutions.
We propose Neural Variational Gradient Descent (NVGD), which is based on parametrizing the witness function of the Stein discrepancy by a deep neural network whose parameters are learned in parallel to the inference, mitigating the necessity to make any kernel choices whatsoever.
We empirically evaluate our method on popular synthetic inference problems, real-world Bayesian linear regression, and Bayesian neural network inference. 

\end{abstract}

\section{Introduction}
This work is concerned with the problem of generating samples from an unnormalized Bayesian posterior. In particular, we are interested in the case where the target posterior is estimated from a large dataset, and only stochastic (minibatch) approximations are available.

In this setting, standard MCMC algorithms such as Hamiltonian Monte Carlo \citep{neal_hmc} or MALA \citep{roberts1998} face difficulties \citep{betancourt_incompatibility, roberts1998}. As a consequence, practitioners tend to prefer simpler methods such as variational inference (VI) and stochastic gradient Langevin dynamics (SGLD) \citep{sgld}. A more recent method developed by \citet{svgd}, Stein variational gradient descent (SVGD), has been gaining popularity.

\begin{figure}[ht]
\centering
\includegraphics[width=0.8\columnwidth]{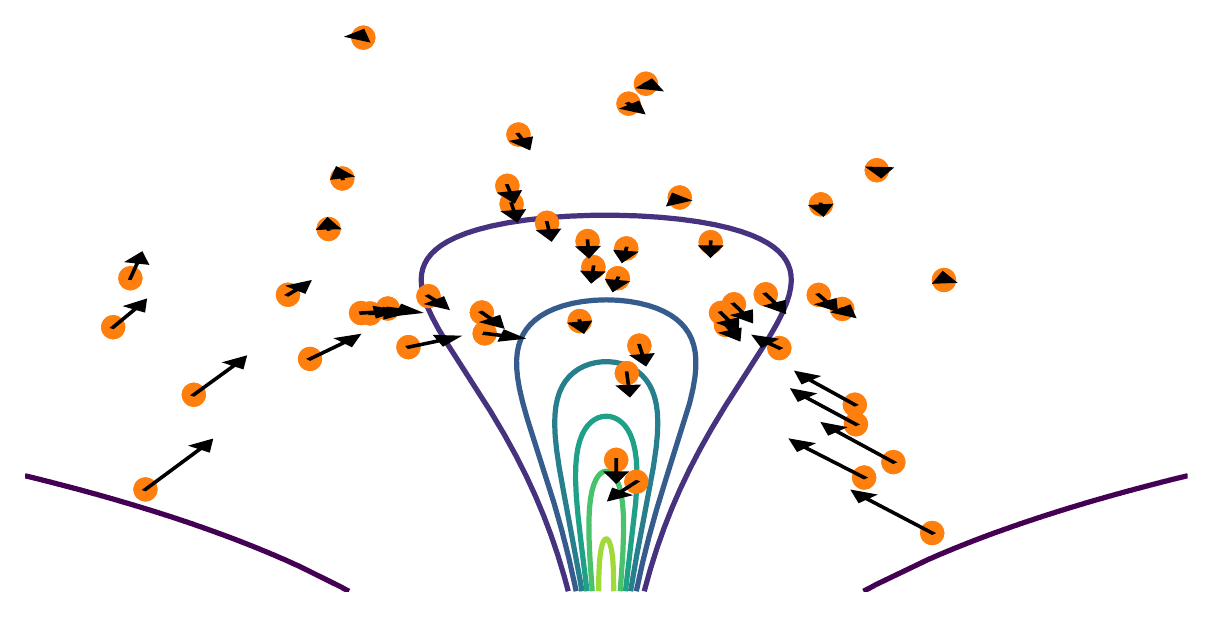}
\caption{
\textbf{Neural Variational Gradient Descent:} Initialize a set of samples $x_1, \dots, x_n$ (orange). 
At each iteration, update samples via $x_i = x_i + \e f_\theta(x_i)$ (black arrows), where the neural network $f_\theta$ is trained to minimize the KL divergence from the target to the particles.
In contrast to MCMC methods, NVGD is deterministic and can take advantage of `between-chain' interactions.
}
\end{figure}

These methods have in common that they minimize the KL divergence $\KL(q \ \Vert \ p)$ between an approximating distribution $q$ and the target $p$: 
VI does so explicitly by optimizing a parameterized density $q_\theta$, thus necessitating the choice of a (more or less) limited variational family of densities, which can introduce bias.
SGLD and SVGD do so non-parametrically by producing samples whose distribution follows a gradient-descent-like trajectory towards the target \citep{jko, liu2017stein}. 

In particular, SGLD approximates a gradient flow in the Wasserstein space of probability measures.
This gradient flow can also be approximated deterministically by evolving a set of particles using SVGD \citep{svgd}, which allows to take the geometry of the target space into account.
However, this geometry has to be encoded in a kernel function within the Stein class, which is often challenging to choose in practice.

In this work, we aim to overcome those limitations by learning an update function $f_\theta$ parameterized by a neural network that is trained in parallel to the inference. The samples are then updated via $x \leftarrow x + \varepsilon f_\theta(x)$.
This approach has the following benefits:
\begin{itemize}
\item Unlike VI, our method does not constrain inference to a fixed family of distributions.
\item Unlike SGLD, our method is deterministic and adapts to the geometry of the target posterior.
\item Unlike SVGD, it is not dependent on a choice of kernel or kernel parameters and thus can \emph{automatically} adapt to the geometry of the target.
\end{itemize}
Empirically, we observe that our method outperforms or matches SVGD for a large array of choices of kernel parameters, while itself not relying on a kernel. In addition, we observe in synthetic experiments that our method exhibits less asymptotic bias than Langevin dynamics or SVGD.

\section{Inference via KL-Minimizing Flows} 
\label{sec:inference-via}

\citet{jko} famously showed that many variational schemes can be viewed as a gradient flow in the space of probability measures equipped with the Wasserstein metric. In this section, we will give a very short overview of the results relevant for our setting.

Let $p \in \mathcal P(\R^n)$ be the target posterior we wish to sample from.
We start by casting the problem of sampling as an optimization problem: the goal is to generate samples distributed according to the minimizer $q^*$ of the KL divergence,
\begin{equation*}
  q^* = \argmin_{q \in \mathcal P'} \KL(q \ \Vert \ p),
\end{equation*}
where $\mathcal P'$ is some appropriate space of candidate measures. 
In the following we will consider the space $\mathcal P$ of probability measures on $\R^n$ equipped with the $2$-Wasserstein metric. This metric allows the definition of a \emph{Wasserstein gradient flow}: a continuous trajectory $(q_t)_{t \geq 0}$ in $\mathcal P$ that locally minimizes a functional $F(q_t)$.
For details on Wasserstein gradient flows we refer to textbooks \parencite[e.g.][]{ot,ambrosio_flows}.
We are interested in the case where this functional is the KL divergence: $F(q) = \KL(q \ \Vert \ p)$. In terms of samples $X_t \sim q_t$, this KL-minimizing flow is characterized by the deterministic Markov process $(X_t)_{t \geq 0}$ given by
\begin{equation} \label{eq:klgrad}
  \frac{dX_t}{dt} = \nabla \log p(X_t) - \nabla \log q_t(X_t),
\end{equation}
where $X_0 \sim q_0$ is initialized according to some initial distribution $q_0$, and $q_t$ denotes the density of $X_t$.
Under reasonable conditions, this process converges in the sense that $\lim_{t \to \infty} \KL(q_t \ \Vert \ p) = 0$ \citep{convergence_ula}.
The direct application of the Euler discretization
\begin{equation*}
  x_{k+1} = x_k + \e \Big(\nabla \log p(x_k) - \nabla \log q_k(x_k) \Big)
\end{equation*}
as a sampling method is intractable because it is usually not possible to efficiently compute $q_k(x)$.\footnote{For an invertible, differentiable transformation $T$, the pushforward distribution of $q$ under $T$ can be computed via the change of variables formula $T_\#q(x) = q(T^{-1} x) \cdot \det \vert J_{T^{-1}}(x) \vert$. The computation of the determinant of a general $d \times d$ matrix needs $\mathcal O(d^3)$ operations; when $d$ is large, this is too expensive.}
Instead, the usual discretization is Langevin dynamics (SGLD), which \citet{jko} have shown to be equivalent (in the limit of infinitesimal step-size) to the flow~\eqref{eq:klgrad} in the sense that the marginal distributions $(q_t)_{t \geq 0}$ are equal.

We take a different approach. By evolving many samples in parallel, we estimate the gradient using a neural network $f_\theta$ and then perform the update
\begin{equation}
  x_{k+1} = x_k + \e f_\theta(x_k)
\end{equation}

\section{Neural Variational Gradient Descent} \label{sec:method}
In this section, we will see how to train a neural network $f_\theta$ to approximate the gradient $\nabla \log p(x_k) - \nabla \log q_k(x_k).$
We build on the method of Stein variational inference developed by \citet{svgd}.
In a slightly non-standard manner, let us define the Stein discrepancy \citep{sd} given an $L^2(q)$-measureable vector field $f: \R^n \to \R^n$ as
\begin{align}
  \SD(q \ \Vert \ p; f) =& \ev_{x \sim q} \Big[ f(x)^T \big(\nabla \log p(x) - \nabla \log q(x) \big) \Big] \nonumber \\
  =& \ev_{x \sim q} \Big[ f(x)^T \nabla \log p(x) + \dv f(x) \Big], \label{eq:sd}
\end{align}
where the second equality follows via partial integration (details in Appendix~\ref{app:sd}).
Here, $\dv f = \sum_{i=1}^d \frac{\partial f_i}{\partial x_i}$ denotes the divergence of $f$ (the trace of $f$'s Jacobian).

There are two properties of the Stein discrepancy that deserve special emphasis:
Firstly, it can be estimated using only samples from $q$, without access to the explicit form of the density.
Secondly, a regularized Stein discrepancy is maximized by the Wasserstein gradient from Equation~\eqref{eq:klgrad}.
Indeed, following \citet{lsd}, let us define the \emph{regularized Stein discrepancy} as
\begin{equation} \label{eq:rsd}
  \RSD(f_\theta) = \SD(q \ \Vert \ p; f_\theta) - \frac{1}{2} \Vert f_\theta \Vert^2_{L^2(q)},
\end{equation}
where $\Vert f_\theta \Vert^2_{L^2(q)} = \ev_{x \sim q}[f_\theta(x)^Tf_\theta(x)]$. Now it is easy to confirm that
\begin{equation*}
  f^* = \argmax_{f \in L^2(q)} \RSD(q \ \Vert \ p; f) = \nabla \log p - \nabla \log q.
\end{equation*}
This choice of regularization is equivalent to bounding the total squared distance by which the particles are moved.
The regularization term is necessary because otherwise the optimization could scale $f$ by an arbitrarily large factor.

\subsection{Computation}
Let $x_1, \dots, x_n$ be a set of samples which we wish to transport towards the target posterior $p$. In this section, we describe how to compute a single iteration of our method; the entire procedure is summarized in Algorithm~\ref{alg:main}.

While it is intractable to compute $\RSD(f)$ exactly, we can approximate it with the Monte Carlo estimate
\begin{multline*}
  \widehat{\RSD}(f) = \frac{1}{n} \isum f(x_i)^T \nabla \log
  p(x_i) + \dv f(x_i) \\ - \frac{1}{2} f(x_i)^T f(x_i).
\end{multline*}
If the dimension $d$ of the sample space is large, the $O(d^2)$ computation of the divergence $\dv f$ might be too expensive. We follow \citet{lsd} in using Hutchinson's estimator \citep{hutchinson}, which gives an efficient and unbiased estimate of $\dv f$ via
\[
  \dv f(x) \approx z^T \nabla f(x) z, \ \ z \sim \mathcal{N}(0, 1),
\]
which is derived from the identity
\[
  \dv f(x) = \ev_{z \sim \mathcal{N}(0, 1)}[z^T \nabla f(x) z],
\]
and has been used in a number of recent works \citep{lsd, han2017, grathwohl_ffjord_2018}.

One iteration of our method then takes two steps: 
\begin{enumerate}
\item Train the model $f_\theta$ to maximize $\widehat{\RSD}$ for a fixed number of steps, and
\item update the particles using the trained model. 
\end{enumerate}
The full algorithm is described in Algorithm~\ref{alg:main}. To prevent the model from overfitting, we track the value of the $\widehat{\RSD}$ on a validation set of particles and early-stop the training if the validation $\widehat{\RSD}$ stops increasing.

\begin{algorithm}[tb]
   \caption{Neural variational gradient descent\hs{unclear what RSD means when just reading the algo}}
   \label{alg:main}
\begin{algorithmic}
\STATE {\bfseries Input:} Learning rates $\eta$ and $\e$, number of gradient updates $P$
\STATE Initialize parameters $\theta$ and particles $x_1, \dots, x_n$
\REPEAT
\STATE $X_\text{train}, \ X_\text{validation} \gets \text{RandomSplit}(x_1, \dots, x_n)$
\FOR{$P$ steps}
\STATE $\theta \gets \theta + \eta \nabla_\theta \widehat{\RSD}(\theta; X_\text{train})$
\IF{EarlyStopCondition$(X_\text{validation})$}
\STATE {\bfseries break}
\ENDIF
\ENDFOR
\STATE $x_i \gets x_i + \e f_\theta(x_i)$ for all $1 \leq i \leq n$
\UNTIL{Converged}
\end{algorithmic}
\end{algorithm}

\subsection{Computational Complexity}
In large-data applications, the bottleneck of our method is the computation of the posterior $\nabla \log p$, which needs to be computed $n$ times per step. 

The same bottleneck applies to competing methods such as SGLD, SVGD, and variational inference, which makes the methods similar as far as computational cost \emph{per step} is concerned. 
For a computational comparison between the methods, the main factor is therefore the number of iterations needed to converge.
Experimentally, we show that our method converges in a similar or smaller number of steps than both SGLD and SVGD.

\section{Related work} \label{sec:related}
\paragraph{SVGD.}
Our method is based on the work of \citet{svgd}, who developed the SVGD algorithm.
It has been previously noted that this algorithm often fails when the target distribibution is high-dimensional and has a complex shape. Attempted remedies include lower-dimensional projections \citep{message_passing_svgd, sliced_ksd, chen2020projected} or modifications to the update equations \citep{gallego2018stochastic, chewi2020,dangelo2021annealed}.
The main difference between our method and these ones is that they only consider perturbations within an RKHS, thus requiring the explicit choice of a kernel function and restricting the flexibility of the method, while our method does not require any kernel choice whatsoever.

\paragraph{Learning the witness function.}
The approach used in this paper for learning the Stein discrepancy is modeled after the approach used by \citet{lsd} to train and evaluate energy-based models. Another work that uses a similar approach to learn a Stein discrepancy is \citet{stein-neural-sampler}. Both of these works directly minimize the Stein discrepancy between an approximating distribution (an energy-based model or a set of samples, respectively) and a target. In contrast, our method minimizes the KL-divergence by leveraging Equation~\ref{eq:sd}, using an implicit gradient obtained via the Stein discrepancy.

\paragraph{Extensions.}
Recently, many extensions have been proposed for the SVGD algorithm, including using projections \citep{chen2020projected}, stochastic disrepancies \citep{gorham2020stochastic}, second-order methods \citep{detommaso2018stein}, additional Langevin noise \citep{gallego2018stochastic}, importance weighting \citep{yao2020stacking}, non-Markovian updates \citep{ye2020stein}, explicit noise models \citep{chang2020kernel}, and temperature annealing schedules \citep{dangelo2021annealed}.
While we focus on the simplest setting as a proof of concept in this work, most of these ideas do not rely on the kernelized form of the update and could also be readily applied to our method.
We leave this as an avenue for future work.

\section{Experiments} \label{sec:experiments}
In this section, we test our NVGD on various synthetic benchmarks, one large-data logistic regression task, and Bayesian neural network (BNN) inference.
\footnote{Code for all experiments is available at \url{https://github.com/langosco/neural-variational-gradient-descent}}
For most experiments we choose $f_\theta$ to be a simple neural network with two linear hidden layers of size $32$ and an output layer of size $d$, where $d$ is the dimensionality of the sample space. An exception is the BNN task, where we choose a larger network with three hidden layers and $256$ units per layer. \hs{reviewer question: how does the network architecture impact performance? Also this network is fairly simple, does this mean the problems are also very simple? Is it even feasible to learn larger numbers of network parameters?} \la{a large difference in the BNN experiment. for smaller experiments I expect the difference to be small; havent tested this yet though.}


We will compare NVGD with two closely related sampling algorithms: SVGD \citep{svgd} and the unadjusted Langevin algorithm (ULA). Both are approximations of the Wasserstein gradient flow of the KL divergence: SVGD is a particle-based algorithm that at every step applies the particle transformation in a reproducing kernel Hilbert space that locally minimizes the KL divergence. ULA is an MCMC algorithm that has been shown to also minimize the KL divergence. In the logistic regression and the BNN experiments we estimate the posterior density using minibatches; in this setting, ULA is referred to as stochastic gradient Langevin dynamics (SGLD) \cite{sgld}.

For SVGD, one has to choose a kernel. The most frequent choice in the literature is the squared-exponential kernel $k(x, y) = \exp(- \Vert x - y \Vert^2 / (2 h^2))$ with bandwidth $h$ chosen according to the median heuristic\hs{would be nice to say this in the intro and say that this kernel is really unsuited for many problems due to: stationarity, infinite smoothness, etc.}
\[
  h^2 = \frac{\med^2}{2 \log n},
\]
where $\med^2$ is the median of all squared pairwise distances $\Vert x_i - x_j \Vert^2$ between samples $x_1, \dots, x_n$. We follow this choice in all experiments.

\begin{figure}[t]
\includegraphics[width=\columnwidth]{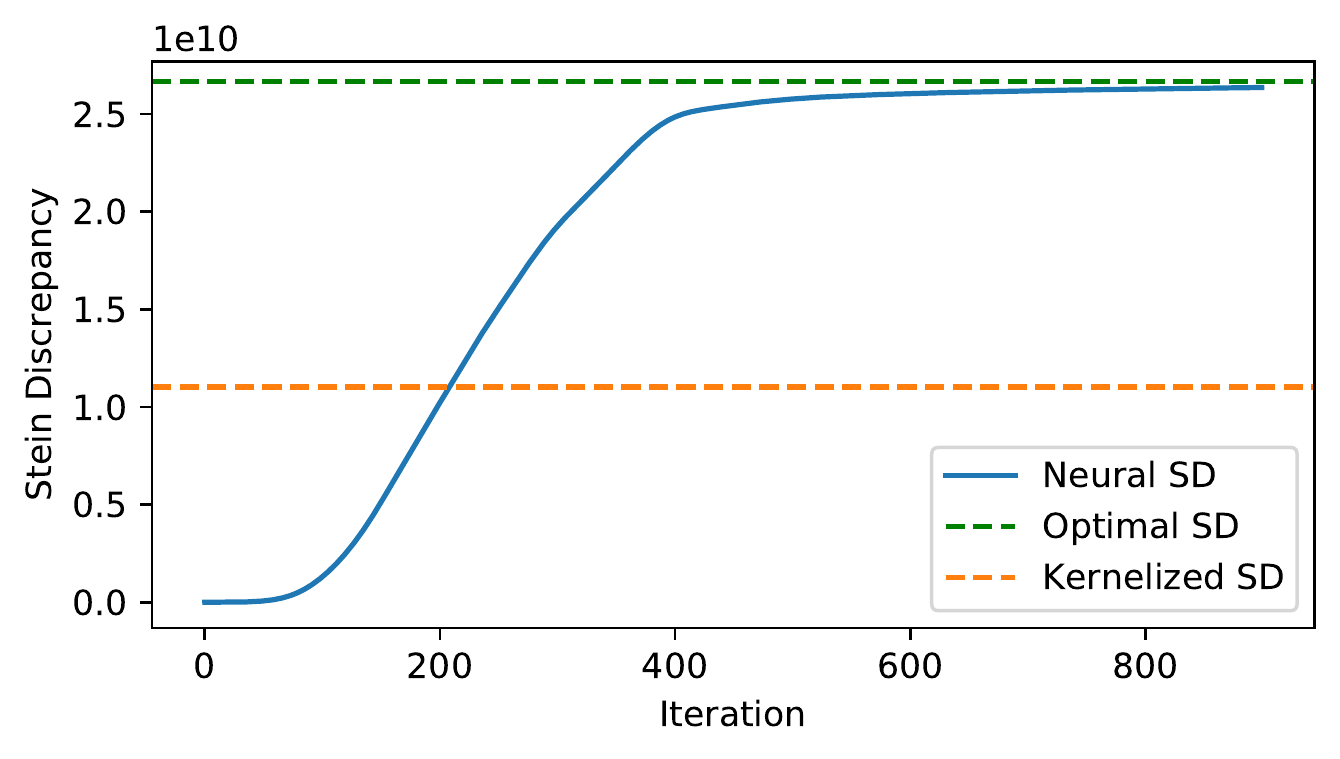}
\caption{The Stein discrepancy over the course of gradient descent on $\theta$. Larger is better, since the Stein discrepancy is proportional to the amount that one particle iteration reduces the KL divergence (Equation~\ref{eq:sd}). Our proposed NVGD converges closely to the optimal Stein discrepancy.}
\label{fig:sd-maxing}
\end{figure}

\subsection{Minimizing the KL in one step}
Our first experiment is a sanity check: we confirm that, on a synthetic task, our method approximates the true gradient of the KL, that is $f_\theta(x) \approx f^*(x) = \nabla \log p(x) - \nabla \log q(x)$.

In the synthetic examples below we have access to both $p$ and $q$ (which is not the case in real applications) and can thus compute $f^*$ directly. We confirm that our method successfully learns to approximate $f^*$.

Results are shown in Figure~\ref{fig:sd-maxing}.
For comparison, we also plot the kernelized Stein discrepancy, rescaled such that the kernelized (SVGD) update has $L_2$-norm equal to $f^*$ (rescaling is necessary because the scale of the update, which corresponds to the particle step size, is a priori arbitrary).

We choose the target posterior $p$ to be an ill-conditioned Gaussian in $\R^{50}$ with mean zero and a diagonal covariance matrix with entries spaced logarithmically from $10^{-4}$ to $1$.

We sample a batch of $1000$ particles from a `proposal' distribution $q$, here chosen to be a standard Gaussian, and train the network $f_\theta$ to maximize the rescaled Stein discrepancy~\eqref{eq:rsd} for $1000$ iterations. As visible in Figure~\ref{fig:sd-maxing}, the network quickly learns to match the Wasserstein gradient.

\begin{figure}[t]
\includegraphics[width=\columnwidth]{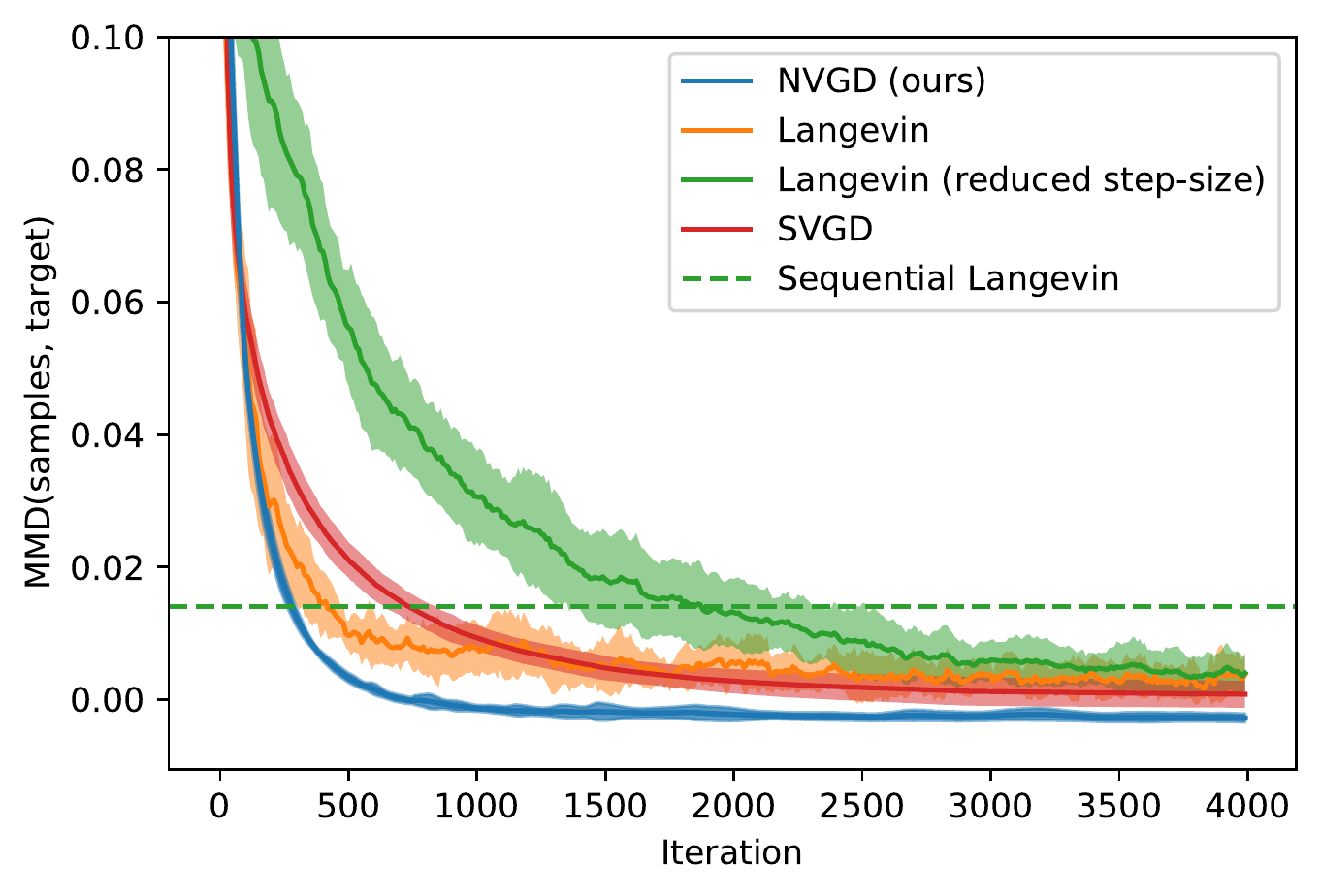}
\caption{We compare NVGD, SVGD, and Langevin dynamics on the funnel density. If we use the same step-size for NVGD and pULA, then the convergence is fast, but the asymptotic error is larger for pULA. If the pULA step-size is chosen smaller, then it converges as well as NVGD, but more slowly. 
We plot the mean and standard deviation across ten runs.
The step-size of SVGD is tuned to be maximal while still converging to low MMD error. 
}
\label{fig:funnel-mmd}
\end{figure}

\subsection{Synthetic benchmark distributions}
Neal's Funnel \citep{neal_funnel} is a distribution on $\R^d$ defined via the density
\begin{equation}
  \label{eq:funnel}
  p(x) = N(x_1; 0, 3) \cdot \prod_{i=2}^d N(x_i; 0, \exp(x_1)).
\end{equation}
This density is a common benchmark for sampling algorithms because it is easy to sample from directly (and thus easy to track convergence) but also challenging due to its geometry.

We choose $d=2$ and initialize a set of $100$ particles by sampling from a standard Gaussian and develop them along the approximate gradient flows given by the different methods. We track convergence using Maximum Mean Discrepancy (MMD) \citep{mmd} with a squared-exponential kernel. Figure~\ref{fig:funnel-mmd} shows how the methods converge, measured by MMD, and Figure~\ref{fig:funnel-scatter} is a scatterplot that shows why the funnel is a challenging target.

\begin{figure}[t]
\resizebox{\linewidth}{!}{
\input{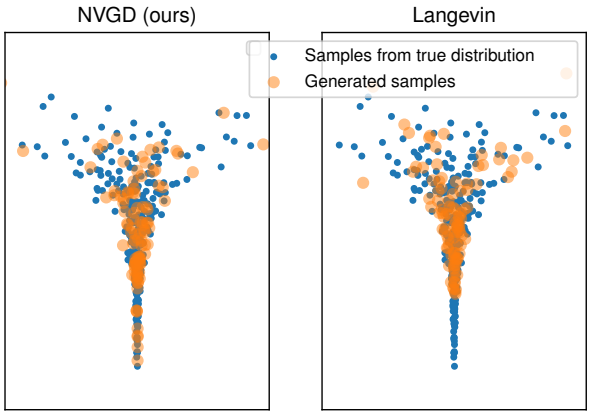}
}
\caption{Samples generated for the Funnel density (same setting as Figure~\ref{fig:funnel-mmd}). The difficulty arises from the funnel geometry: too wide at the top, and too narrow at the bottom. In particular, Langevin dynamics (pULA) is not able to sample from the narrow strait if the step size is too large.}
\label{fig:funnel-scatter}
\end{figure}

ULA and NVGD are discretizations (stochastic and deterministic, respectively) of the Wasserstein gradient flow~\eqref{eq:klgrad}. It is known that in the limit of infinitesimally small step-size, ULA perfectly approximates this flow. Any hope of outperforming ULA must therefore lie in a better approximation of the flow \emph{relative to the step-size}, that is, we need to show that ULA with a step-size $\eta > 0$ is either more biased than NVGD (if $\eta$ is too large) or takes more steps to converge than NVGD (if $\eta$ is too small).

When comparing the methods on this task, where it is possible to track convergence exactly, we find that NVGD outperforms ULA. In particular, we compare with two versions of ULA: the standard single-chain ULA, and a parallel version pULA where $n$ chains are initialized and evolved simultaneously. The parallel version is more similar to NVGD, which also transports a number of particles in parallel. 

To reduce correlation between sequential ULA samples, most are discarded (e.g., only every 100th sample is retained). We develop a single chain for $5000 \cdot 100 = 5 \cdot 10^5$ steps, corresponding to the same number of likelihood evaluations as $5000$ steps for $100$ parallel chains. 
However, even after discarding all but every $100$th sample, the resulting samples are still correlated.
This results in a high MMD error (Figure~\ref{fig:funnel-mmd}). For sequential ULA, we use the same learning rate as the pULA run that achieved lower MMD error.
When using a higher learning rate instead, we found that the samples were less correlated at the cost of worse coverage of the posterior.

In many applications, ULA is augmented with an accept/reject step that adjusts for its asymptotic bias. We compare to the \emph{unadjusted} version because the accept/reject step is not used in large-data applications (recent work by \citet{garriga2021exact} has shown it possible to do so using custom integrators, but this is beyond the scope of this work). Instead, the stochastic gradient variant of ULA (SGLD) is preferred, as for instance in \citet{wenzel_cold_posteriors}. Where an accept/reject step is feasible, NVGD can be similarly augmented, though we do not study this here.

\begin{figure}[t]
\includegraphics[width=\columnwidth]{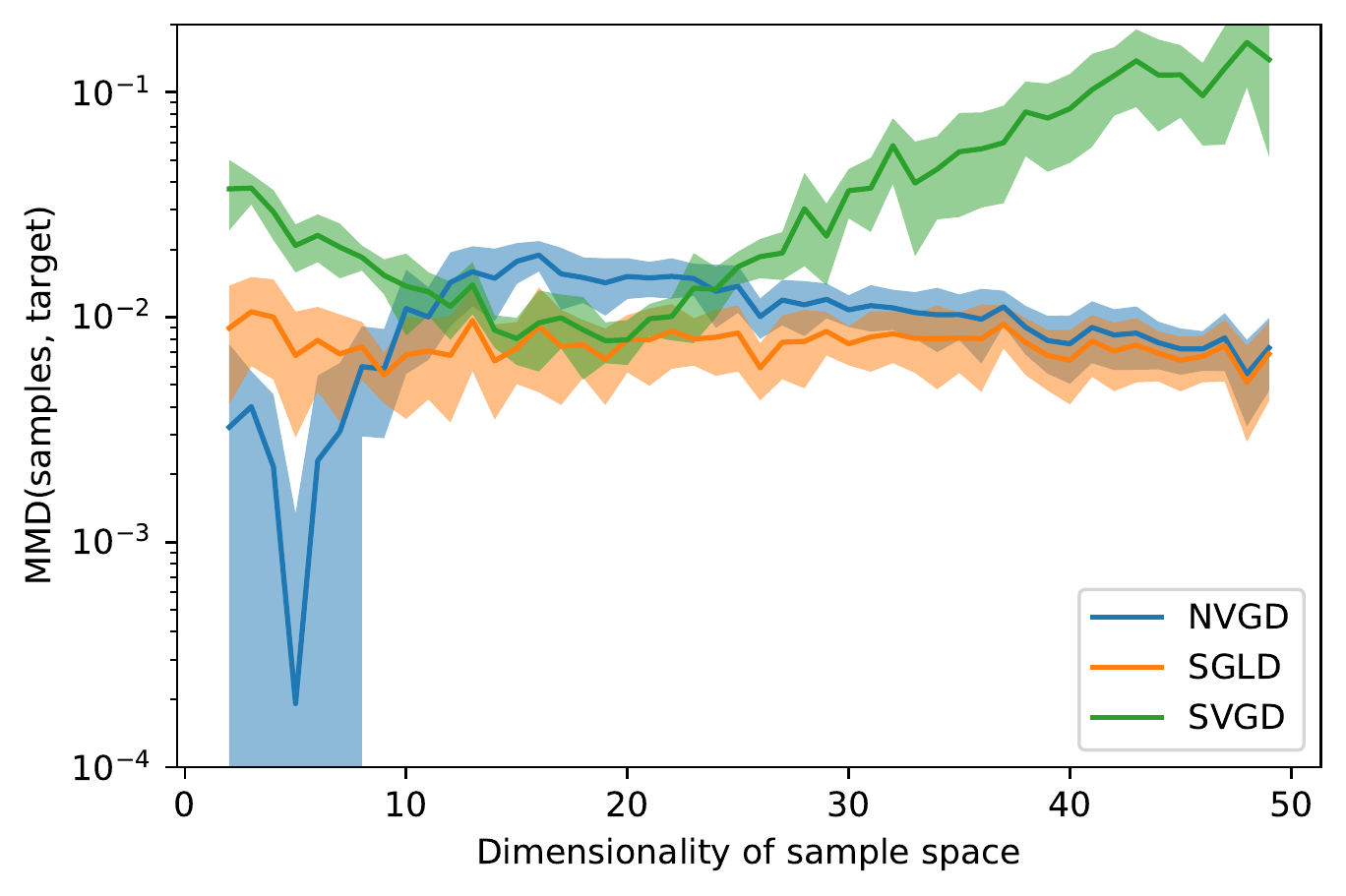}
\caption{Results of sampling from Neal's Funnel in dimensions $2$--$40$. Lower is better.
Despite what this figure appears to show, the performance of NVGD and ULA also degrades with increasing dimensionality of the target.
We plot the mean and standard deviation across ten runs.
}
\label{fig:funnel-sweep}
\end{figure}

\paragraph{Higher-dimensional funnels.}
We test performance of NVGD for the Funnel distribution across $40$ dimensions. Figure~\ref{fig:funnel-sweep} shows how the performance of SVGD degrades as the dimensionality increases. In other settings, it has already been shown that high-dimensional sample spaces are often challenging for SVGD \citep{message_passing_svgd, sliced_ksd}, though in general this depends on the choice of kernel function.
Figure~\ref{fig:funnel-sweep} also seems to show that the SGLD and NVGD are immune to the difficulties of increasing dimensionality. Unfortunately, visual inspection of samples shows that this is just an artefact of the MMD measure, and the performance of the methods does get worse as the dimensionality increases.

One thing that is surprising about Figure~\ref{fig:funnel-sweep} is that ULA performs equally or better than NVGD, which seems to contradict the results in Figure~\ref{fig:funnel-mmd}. This discrepancy likely stems from two sources: 1) ideally, the learning rate of $f_\theta$ and the size of its hidden layers should be adapted as the dimension increases, but we keep it fixed; and 2) the advantage of NVGD---better coverage along the narrow end of the $y$-axis---becomes less relevant for the MMD measure as the dimension increases, since all $d-1$ other axes are symmetric.

\begin{figure}[t]
\includegraphics[width=\columnwidth]{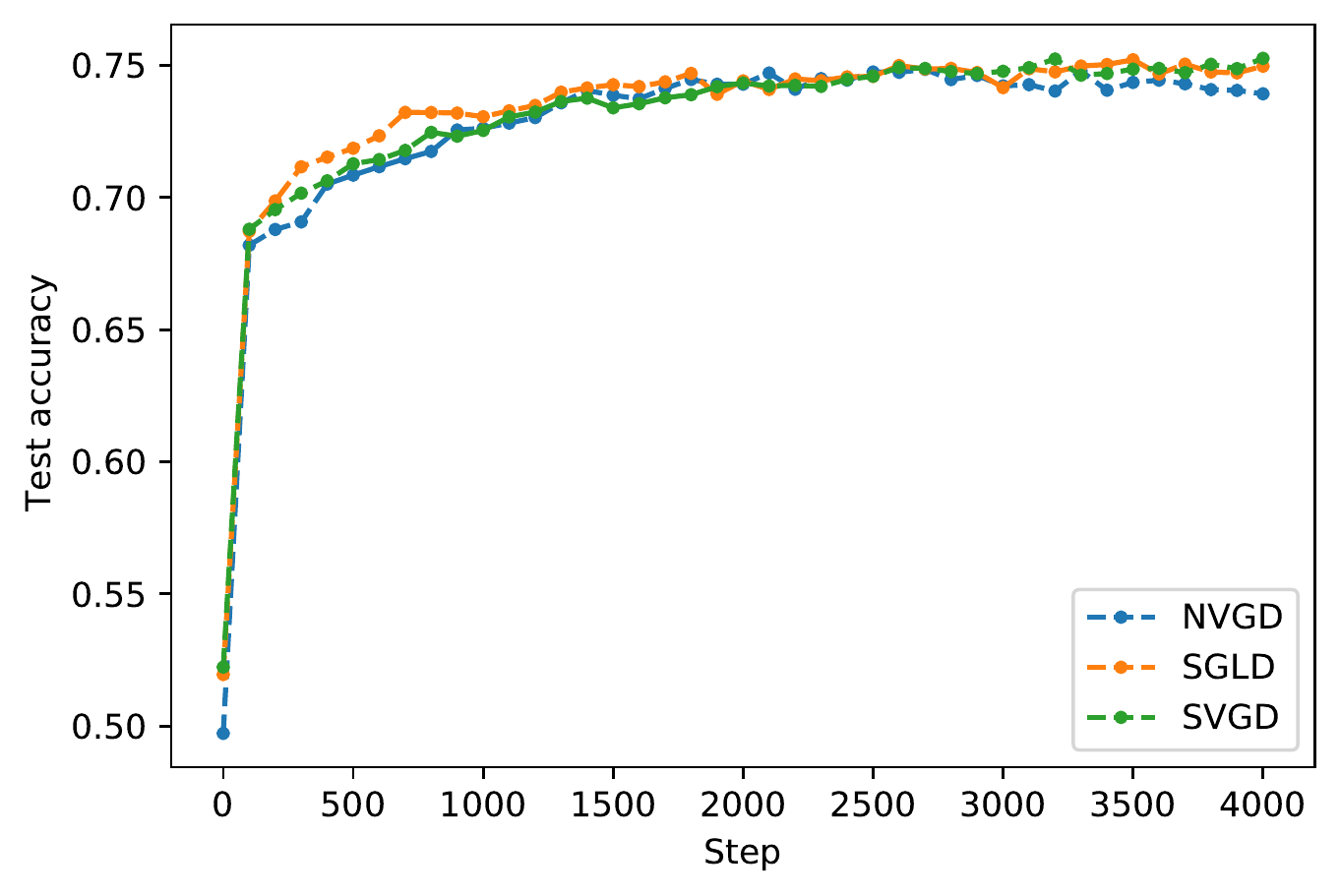}
\caption{Bayesian logistic regression on the Covertype dataset. We use $100$ particles and a mini-batch size of $128$, so $4000$ steps corresponds to one epoch. Results are averaged over ten runs.}
\label{fig:covertype}
\end{figure}

\begin{figure}[t]
\includegraphics[width=\columnwidth]{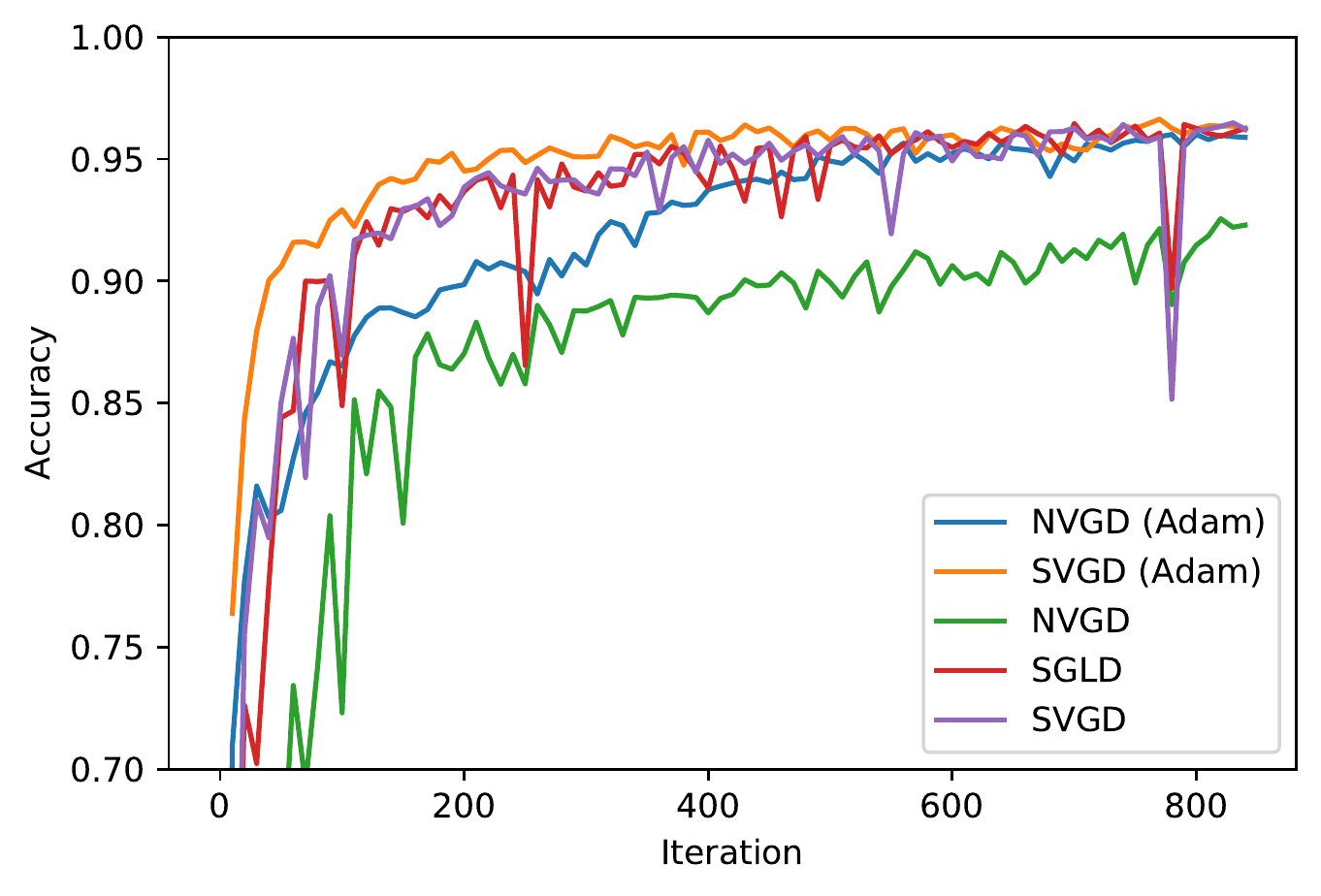}
\caption{Bayesian neural network inference on MNIST. The vanilla version of our method (NVGD) performs poorly; when combined with the Adam optimizer, it performs better, but still converges more slowly than either SVGD and SGLD. One epoch corresponds to $420$ iterations.}
\label{fig:bnn}
\end{figure}

\subsection{Bayesian logistic regression}
We sample from a Bayesian logistic regression model trained on the binary Covertype dataset\footnote{available at \url{https://www.csie.ntu.edu.tw/~cjlin/libsvmtools/datasets/binary.html}} \citep{covertype}.
This dataset consists of $581,012$ observations of $55$ features and a binary label. We split off a proportion of $0.2$ for use as test set.
We follow~\citet{svgd} in using a hierarchical prior on $\beta$:
\begin{align*}
  \alpha \sim \ & \text{Gamma}(a_0, b_0), \\
  \beta \sim \ & \mathcal{N}(0, \alpha^{-1}),
\end{align*}
choosing the rate and inverse scale parameters as $a_0 = 1$ and $b_0 = 0.01$. The logistic model for $(x, y)$ is then
\begin{equation*}
  y \sim  \text{Bernoulli}\left(\frac{\exp(x^\top\beta)}{1 + \exp(x^\top\beta)}\right).
\end{equation*}
To test our method in the stochastic gradient setting, we estimate the likelihood using minibatches of size $128$ and train for one epoch ($4085$ steps).

We compare again the three methods NVGD, SGLD, and SVGD. For all methods, we sample $100$ particles and choose the particle step-size via tuning on a validation set of size $0.1$ subsampled from the training set. We see in Figure~\ref{fig:covertype} that all three methods perform similarly. The test accuracy is averaged over ten runs.
\vf{can we discuss why this is? is the posterior too simple? can we maybe visualize it somehow?}

\subsection{Bayesian Neural Network}

While Bayesian neural networks (BNNs) \citep{mackay1992practical, neal1992bayesian} have gained a lot of popularity recently \citep{immer2021improving, immer2021scalable,  fortuin2021bayesian, fortuin2021bnnpriors, fortuin2021priors, izmailov2021bayesian, wenzel_cold_posteriors}, SVGD has been used for their inference on only few occasions \citep{wang2018function, hu2019applying, dangelo2021stein, dangelo2021repulsive}, possibly due to the difficulty to adapt to their high-dimensional posterior geometry.
We train a BNN on the MNIST dataset and again compare against SVGD and SGLD. The Bayesian classifier network consists of two convolutional and one fully-connected layer with $4594$ parameters in total. Results are shown in Figure~\ref{fig:bnn}.

For MNIST we use a batch size of 128. We tune the BNN learning rate to maximize accuracy on a validation set comprising $10\%$ of the training set, and apply the Adam optimizer to the SVGD and NVGD updates (this cannot easily be done for the SGLD gradients, so we use vanilla SGLD; for comparison, we also include results for vanilla SVGD and NVGD).

While all three methods reach similar accuracy, NVGD (our method) converges more slowly. Closer observation suggests that the gradient learner $f_\theta$ underfits; future work will consider architectures other than the simple linear feedforward network used here.

\section{Conclusion} \label{sec:conclusion}

We introduced NVGD, a novel particle-based variational inference method that transports particles from a reference distribution to the target posterior along a dynamically learned trajectory.
In comparison to the two most relevant competitor methods (SVGD and SGLD), our method outperforms them on synthetic benchmarks, but slightly underperforms on Bayesian neural network inference.
We hypothesize that this is because the current architecture underfits in complex sample spaces (such as the parameter space of a Bayesian neural network), and that the method can be improved by using larger or more complex architectures.


\printbibliography

\newpage
\onecolumn

\appendix

\section{Proofs}
\subsection{The Stein Discrepancy} \label{app:sd}
In Equation~\ref{eq:sd} we defined the Stein discrepancy given $f$ as
\begin{align}
  \SD(q \ \Vert \ p; f) =& E_{x \sim q}[f(x)^T(\nabla \log p(x) - \nabla \log q(x))] \nonumber \\
  =& E_{x \sim q}[f(x)^T \nabla \log p(x) + \dv f(x)].
\end{align}
We will now provide the assumptions under which the second equality holds, and show how it follows from integration by parts.
\begin{proposition} \label{prop:div}
  Let $q$ be a probability density on $\X$ and $f: \X \to \R^d$, $f \in L^2(q)$ a differentiable function. Assume that 
\begin{equation}
\int_{\partial \X} f(x) q(x) \dif x = 0,
\end{equation}
where $\partial \X$ denotes the boundary of $\X$. If $\mathcal X$ is instead all of $\mathbb R^d$, then the condition must hold in the limit $r \to \infty$ for integral over the ball $B_r$ of radius $r$ centered at the origin. Then
\begin{equation*}
  E_{x \sim q}[\dv f(x)] = - E_{x \sim q}[f(x)^T \nabla \log q(x)].
\end{equation*}
\end{proposition}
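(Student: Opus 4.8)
The plan is to prove this as a multivariate integration-by-parts (divergence theorem) identity, using the standard product rule for divergence to move the derivative off of $f$ and onto $q$. First I would write the left-hand side explicitly as an integral against the density, $\ev_{x \sim q}[\dv f(x)] = \int_{\X} q(x)\, \dv f(x) \, \dif x$, and then invoke the product rule
\begin{equation*}
  \dv\big(q(x) f(x)\big) = q(x)\, \dv f(x) + \nabla q(x)^T f(x),
\end{equation*}
which holds pointwise wherever $q$ and $f$ are differentiable. Rearranging gives $q\, \dv f = \dv(q f) - \nabla q^T f$, so that integrating over $\X$ yields
\begin{equation*}
  \int_{\X} q\, \dv f \, \dif x = \int_{\X} \dv(q f) \, \dif x - \int_{\X} \nabla q^T f \, \dif x.
\end{equation*}

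Next I would dispatch the two terms on the right separately. For the total-divergence term, I would apply the divergence theorem, $\int_{\X} \dv(q f)\, \dif x = \int_{\partial \X} q f \cdot n \, \dif S$, and observe that this is exactly the boundary integral assumed to vanish in the hypothesis (respectively, its limit over balls $B_r$ as $r \to \infty$ when $\X = \R^d$). For the remaining term, the key algebraic step is the log-derivative identity $\nabla q = q\, \nabla \log q$, valid wherever $q > 0$; substituting gives $\int_{\X} \nabla q^T f \, \dif x = \int_{\X} q\, (\nabla \log q)^T f \, \dif x = \ev_{x \sim q}[f(x)^T \nabla \log q(x)]$. Combining the two evaluations produces $\ev_{x \sim q}[\dv f] = 0 - \ev_{x \sim q}[f^T \nabla \log q]$, which is the claimed equality.

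The routine part is the product rule and the rewriting via $\nabla \log q$; the part that requires genuine care is justifying that the boundary term vanishes and that every integral appearing is finite and the manipulations are legitimate. The $L^2(q)$ assumption on $f$ together with differentiability is what I would lean on to guarantee that $\nabla q^T f = q (\nabla \log q)^T f$ is integrable (so the split of the integral is valid), while the explicit boundary hypothesis is precisely what kills $\int_{\partial \X} q f \cdot n\, \dif S$. I expect the main obstacle to be the $\X = \R^d$ case, where there is no genuine boundary and one must instead argue that $\int_{\partial B_r} q f \cdot n \, \dif S \to 0$ as $r \to \infty$; here I would note that this decay is exactly what the limiting form of the stated assumption asserts, so that the finite-domain argument transfers verbatim once the limit is taken.
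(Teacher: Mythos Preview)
Your proposal is correct and is essentially the same argument as the paper's: both apply integration by parts (the divergence theorem) together with the log-derivative identity $\nabla q = q\,\nabla\log q$, so that the boundary term vanishes by hypothesis and the remaining bulk term gives the claimed equality. The only cosmetic difference is that the paper starts from $\ev[f^T\nabla\log q]$ and integrates by parts to reach $-\ev[\dv f]$, whereas you start from $\ev[\dv f]$ and run the product rule the other way; the content is identical.
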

\begin{proof}
  The proposition follows directly from integration by parts. If $n(x)$ is the outward-pointing unit vector on the boundary of $\X$, then
  \begin{align*}
    E[f(x)^T \nabla \log q(x)] &= \int_\X f(x)^T \nabla q(x) \dif x \\
                               &= \int_{\partial \X} f(x)^T n(x) q(x) \dif x - \int_\X \dv f(x) q(x) \dif x \\
                               & = - E[\dv f(x)].
  \end{align*}
Our desired equality follows.
\end{proof}
Proposition~\ref{prop:div} is used in many problems throughout statistics and deep learning. It is the basis of the policy gradient algorithm in reinforcement learning and the score matching algorithm for density estimation \cite{hyvarinen_score_matching}.

\end{document}